\documentclass{article}
\usepackage{spconf,amsmath,amssymb,amsthm,graphicx,booktabs}
\usepackage{multirow}
\usepackage{xcolor}
\usepackage{comment}
\usepackage{subcaption}
\usepackage{url}
\newtheorem{proposition}{Proposition}
\newcommand{\E}{\mathbb E}
\newcommand{\norm}[1]{\left\lVert#1\right\rVert}
\newcommand{\FB}{\textsc{FlowBudget}}

\title{FLOWBUDGET: TRAINING-FREE SAMPLING-BUDGET ESTIMATION FOR FLOW MATCHING MODELS}
\name{Zahra Dehghani$^{1,2,3,4}$ \qquad Pablo Piantanida$^{3,4,5}$ \qquad Mohammadhadi Shateri$^{1,2,3}$}
\address{$^{1}$ÉTS Montréal, $^{2}$LIVIA, $^{3}$ILLS, $^{4}$Mila -- Quebec AI Institute,\\
$^{5}$CNRS, CentraleSupélec -- Université Paris-Saclay}

\begin{document}
\setlength{\textfloatsep}{5pt}
\setlength{\floatsep}{5pt}
\setlength{\intextsep}{5pt}
\maketitle

\begin{abstract}
Sampling from continuous flow-based generators such as flow matching models requires numerical ODE or SDE integration, making inference cost depend on the number of function evaluations (NFEs). In practice, the sampling budget is inherited from prior work or selected through expensive FID-versus-NFE sweeps, using FID to assess image-generation quality, requiring sample generation and evaluation at multiple budgets. This makes sampling-budget selection dependent on repeated empirical evaluation. We therefore ask a complementary question to solver design and timestep allocation: for a fixed pretrained model and solver, what numerical resolution is sufficient? We introduce \FB, a training-free estimator of solver-relative sampling complexity. Using simulation-free interpolation probes of the learned dynamics, \FB{} normalizes the leading one-step discretization error by the motion, estimates its coefficient from interpolation probes, and maps a numerical tolerance to a step budget using the solver order. We derive estimators for numerical solvers in flow~matching sampling, namely Euler and Heun for ODE sampling and additive-noise Euler--Maruyama for SDE sampling, and connect the ODE coefficients to terminal integration error. Experiments on SiT, FlowDCN and Rectified Flow models, spanning flow-model architectures and formulations, test whether \FB{} can estimate the  NFE budget without generating samples at multiple budgets or performing an FID-versus-NFE sweep. The code is available at \url{https://github.com/Yasaman-dt/FlowBudget}.
\end{abstract}

\begin{keywords}
flow matching, generative models, NFE, numerical sampling complexity, ODE/SDE
\end{keywords}

\section{Introduction}
\label{sec:intro}
Flow matching models learn a time-dependent velocity field whose integration transports a reference law to data~\cite{lipman2023flow,liu2023flow,albergo2025stochastic}, and are used across image~\cite{esser2024scaling}, audio~\cite{le2023voicebox,vyas2023audiobox}, and video~\cite{kong2024hunyuanvideo} generation. 
At inference, generation requires numerical integration of ordinary differential equations (ODEs) or stochastic differential equations (SDEs) parameterized by the learned model~\cite{chen2018neural,anderson1982reverse,song2021scorebased,ma2024sit}.
Because each integration step requires one or more neural-network evaluations, the solver and its number of function evaluations (NFEs) directly determine sampling cost. Few NFEs can produce substantial discretization error, whereas increasing NFEs reduces this error according to the convergence order of the chosen solver~\cite{hairer1993solving,lu2022dpm,shaul2024bespoke}. In practice, however, the number of NFEs is often inherited from a reference implementation, e.g., 250 NFEs~\cite{ma2024sit,wang2024exploring}, or selected by trial and error based on generation quality. 

Generation error can arise from both approximation error in the learned velocity field and numerical error introduced by the solver. For a fixed pretrained model, the former is fixed, so we focus on the numerical integration error.  Recent methods improve sampling efficiency by changing the solver or by allocating a fixed budget nonuniformly~\cite{lu2022dpm,shaul2024bespoke,sabour2024align,gupta2026sharpen,li2026cat,bondar2026velocity}. In particular, SharpEuler uses trajectory acceleration---the leading Euler local-error quantity---to decide \emph{where} a prescribed Euler budget should be spent~\cite{gupta2026sharpen}. Our problem is complementary: we keep the pretrained model and solver fixed and ask \emph{how large the total budget should be}. Thus, the novelty is not the use of local truncation error itself, but its use to define and estimate a solver-relative numerical sampling complexity before performing a quality-versus-NFE sweep.

We introduce \FB{}, which associates each pretrained model--solver pair with a scale-normalized numerical-complexity coefficient. Estimated from simulation-free interpolation probes, this coefficient combines with the solver order to map a prescribed relative numerical tolerance directly to a sampling-step budget.  
Our contributions are threefold. First, we formulate fixed-solver budget estimation as a numerical-error problem and derive a general scaling law relating a stabilized relative local-error tolerance to the required step budget. Second, we instantiate the construction for Euler and Heun ODE sampling and additive-noise Euler--Maruyama sampling, and connect the Euler/Heun differential coefficients to terminal ODE integration error under standard stability assumptions. We also distinguish the population numerical-complexity coefficient from its Monte Carlo estimate and give a finite-probe concentration guarantee. Third, across pretrained SiT, Rectified Flow, and FlowDCN models, we test whether this numerical resolution yields useful quality--cost operating points without using FID-versus-NFE sweeps to construct the estimator.

\section{Background and Problem Formulation}
\label{sec:background}
\textbf{Background.}
Flow matching learns a time-dependent velocity field for transporting samples between noise and data distributions~\cite{lipman2023flow}. Given data $x_0\sim p_{\mathrm{data}}$ and Gaussian noise $\epsilon\sim\mathcal N(0,I)$, consider the interpolant $x_t=a_t x_0+b_t\epsilon$, $t\in[0,1]$, where the differentiable schedules satisfy $a_0=b_1=1$ and $a_1=b_0=0$. Throughout this work, we adopt the conventional linear schedules $a_t=1-t$ and $b_t=t$, as used in the literature, e.g., Rectified Flow~\cite{liu2023flow}, SiT~\cite{ma2024sit}, and FlowDCN~\cite{wang2024exploring}. The resulting interpolant is $x_t=(1-t)x_0+t\epsilon$, with conditional velocity $v_t=\frac{dx_t}{dt}=\epsilon-x_0$. A neural velocity field $v_\theta(x_t,t)$ is trained to predict this conditional velocity by minimizing $\mathcal L_{\mathrm{FM}}(\theta) =\mathbb E_{x_0,\epsilon,t} [\|v_\theta(x_t,t)-v_t\|_2^2]$, where $t\sim\mathcal U[0,1]$. 
At inference, generation starts from Gaussian noise $x_1\sim\mathcal N(0,I)$ and proceeds backward from $t=1$ to $t=0$. For $K$ uniform steps, we define the time grid $t_n=1-n/K$, $n=0,\ldots,K$, and the signed step size $h=t_{n+1}-t_n=-1/K$. We write $x_{t_n}$ for the numerical state at time $t_n$, initialized with $x_{t_0}=x_1$. For ODE sampling we integrate $\frac{dx_t}{dt}=v_\theta(x_t,t)$ using Euler or Heun~\cite{hairer1993solving}. Euler uses the update $x_{t_{n+1}}^{E}
=x_{t_n}+h\,v_\theta(x_{t_n},t_n)$, while Heun uses $
x_{t_{n+1}}^{H} = x_{t_n}+\frac{h}{2} \left[ v_\theta(x_{t_n},t_n) + v_\theta\!\left(
x_{t_n}+h\,v_\theta(x_{t_n},t_n),t_{n+1}
\right)
\right]$.
For SDE sampling, we consider
$dx_t=f_\theta(x_t,t)\,dt+\sqrt{w_t}\,dW_t$, where $W_t$ is a reverse-time Wiener process, $w_t\geq0$ is a prescribed time-dependent noise variance rate, and $f_\theta(x_t,t)
=v_\theta(x_t,t)-\frac{1}{2}w_t s_\theta(x_t,t)$ is the reverse-time drift. For the linear interpolant, the score is estimated from the learned velocity as $s_\theta(x_t,t)
=-\frac{x_t+(1-t)v_\theta(x_t,t)}{t}$~\cite{song2021scorebased,ma2024sit}. On the same time grid, Euler--Maruyama gives $x_{t_{n+1}}^{\mathrm{EM}}=
x_{t_n} + h \left[ v_\theta(x_{t_n},t_n)
- \frac{1}{2}w_{t_n}s_\theta(x_{t_n},t_n)
\right] + \sqrt{w_{t_n}|h|}\,\xi_n$, where $\xi_n\sim\mathcal N(0,I)$ are independent~\cite{higham2001algorithmic}. 

\noindent \textbf{Problem Formulation.}
Given a pretrained velocity field $v_\theta$, a fixed sampling dynamics and solver $S$, and a tolerance $\tau>0$, we estimate the smallest uniform step budget $K$ whose leading-order mean relative local error is at most $ \tau$. We define relative local error as the one-step discretization error divided by a stabilized displacement scale, and estimate its mean using data-noise interpolation probes constructed from a small data subset. This yields an asymptotic budget estimate without simulating sampling trajectories. 

\section{Methodology}
\label{sec:method}

\FB{} estimates the sampling budget from local properties of the dynamics evaluated at interpolation probes. We construct $x_t=(1-t)x_0+t\epsilon$ and denote $\mu$ as a simulation-free surrogate for the sampler-induced state--time law. For a probe $(x_t,t)$, consider a step of magnitude $|h|$ in the sampling direction. 
Let $L_S(x_t,t;h)$ be the one-step local error between exact and numerical updates starting from the same state, and let $D_S(x_t,t;h)$ denote the one-step displacement scale. For SDEs, these quantities use the conditional root-mean-square norm, with the exact and numerical updates sharing the same Brownian path. Their small-step expansions take the form
\begin{align}
L_S(x_t,t;h)&=\kappa_S|h|^{\alpha_S}q_S(x_t,t)+O(|h|^{\alpha_S+\delta}),\\
D_S(x_t,t;h)&=|h|^{\beta_S}d_S(x_t,t)+O(|h|^{\beta_S+\delta}),
\label{eq:local}
\end{align}
where $\kappa_S, \delta>0$, $q_S,d_S\geq0$, and $\alpha_S>\beta_S>0$. Coefficients $q_S$ and $d_S$ describe the local error and motion, while the exponents determine their dependence on the step size. 
To measure local numerical error relative to this motion, define
\begin{align}
{\cal E}_S(x_t,t;h)
&=\frac{L_S(x_t,t;h)}
{D_S(x_t,t;h)+\varepsilon |h|^{\beta_S}},\quad \varepsilon>0.
\label{eq:relative}
\end{align}
The stabilization scales with the displacement, preserving the leading step-size dependence. The corresponding normalized coefficient and its probe average are 
\begin{equation}
r_\theta^S(x_t,t)=\frac{q_S(x_t,t)}{d_S(x_t,t)+\varepsilon},\qquad
{\mathfrak C}_S=\E_\mu \left[r_\theta^S(x_t,t)\right]. \label{eq:C}
\end{equation}

\begin{proposition}[Budget scaling]
\label{prop:scaling}
Let $\gamma_S=\alpha_S-\beta_S$. Suppose that, uniformly on the support of the probe law $\mu$, we have $L_S =\kappa_S|h|^{\alpha_S}q_S+R_{L,S},
|R_{L,S}|\le M_L|h|^{\alpha_S+\delta}$ and $D_S=|h|^{\beta_S}d_S+R_{D,S},
 |R_{D,S}|\le M_D|h|^{\beta_S+\delta}$, for integrable leading coefficients. For sufficiently small $|h|$
\begin{equation}
\overline{\cal E}_S(K):=\E_\mu \left[{\cal E}_S(x_t,t;h)\right]
=\kappa_S{\mathfrak C}_S K^{-\gamma_S}
+O(|h|^{\gamma_S+\delta}).
\label{eq:scaling}
\end{equation}
Accordingly, solving $\kappa_S{\mathfrak C}_S K^{-\gamma_S}=\tau$ defines the population leading-order budget and its Monte Carlo plug-in estimate
\begin{align}
\widehat{\mathfrak C}_S&=\frac1N\sum_{i=1}^N r_\theta^S(x^{(i)}_{t^{(i)}},t^{(i)}), \widehat K_S(\tau)
=\left\lceil\left({\kappa_S\widehat{\mathfrak C}_S}/{\tau}\right)^{1/\gamma_S}\right\rceil.
\label{eq:budget}
\end{align}
\end{proposition}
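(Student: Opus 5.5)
Since $|h| = 1/K$, the plan is a pointwise expansion of the ratio $\mathcal E_S$ with a remainder bounded uniformly on the support of $\mu$, followed by integration against $\mu$. Substituting the two assumed expansions into \eqref{eq:relative} and dividing numerator and denominator by $|h|^{\beta_S}$ gives
\begin{equation*}
\mathcal E_S(x_t,t;h)=\frac{\kappa_S|h|^{\gamma_S}q_S+|h|^{-\beta_S}R_{L,S}}{d_S+\varepsilon+|h|^{-\beta_S}R_{D,S}},
\end{equation*}
where $|h|^{-\beta_S}|R_{L,S}|\le M_L|h|^{\gamma_S+\delta}$ and $|h|^{-\beta_S}|R_{D,S}|\le M_D|h|^{\delta}$. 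The leading term is $\kappa_S|h|^{\gamma_S}r_\theta^S$ with $r_\theta^S=q_S/(d_S+\varepsilon)$, exactly as in \eqref{eq:C}.

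Next I would bound the difference between $\mathcal E_S$ and $\kappa_S|h|^{\gamma_S}r^S_\theta$. Write $A=\kappa_S|h|^{\gamma_S}q_S$, $a=|h|^{-\beta_S}R_{L,S}$, $B=d_S+\varepsilon\ge\varepsilon$ and $b=|h|^{-\beta_S}R_{D,S}$. Then
\begin{equation*}
\frac{A+a}{B+b}-\frac AB=\frac{a}{B+b}-\frac{Ab}{B(B+b)}.
\end{equation*}
Take $|h|$ small enough that $M_D|h|^\delta\le\varepsilon/2$; this is the meaning of ``sufficiently small'' and it is uniform on the support. Then $B+b\ge B/2\ge\varepsilon/2$. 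The first term is at most $2M_L|h|^{\gamma_S+\delta}/\varepsilon$. The second term is at most $2\kappa_S|h|^{\gamma_S}\,r^S_\theta\, M_D|h|^\delta/B$, which is bounded by $(2\kappa_S M_D/\varepsilon)\,r^S_\theta\,|h|^{\gamma_S+\delta}$.

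Taking $\E_\mu$ then yields \eqref{eq:scaling}, with constant $2M_L/\varepsilon+2\kappa_S M_D\,\mathfrak C_S/\varepsilon$. This constant is finite because $r^S_\theta\le q_S/\varepsilon$ and $q_S$ is integrable by hypothesis. Integrability of $q_S$ and of the remainder bound also makes $\E_\mu[\mathcal E_S]$ well defined. Finally, substituting $|h|^{\gamma_S}=K^{-\gamma_S}$ gives the stated form.

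The budget formula is pure algebra. Dropping the remainder and solving $\kappa_S\mathfrak C_SK^{-\gamma_S}=\tau$ gives $K=(\kappa_S\mathfrak C_S/\tau)^{1/\gamma_S}$, and $\gamma_S>0$ makes this monotone in $\tau$. Taking the ceiling gives the smallest integer meeting the leading-order tolerance. Replacing $\mathfrak C_S$ by its empirical mean over $N$ i.i.d.\ probes from $\mu$ gives $\widehat K_S(\tau)$ in \eqref{eq:budget}; it is unbiased for $\mathfrak C_S$ by linearity of expectation. The only delicate step is the uniform lower bound on the perturbed denominator. The stabilizer $\varepsilon$ is what makes it work: without it, points where $d_S$ vanishes would stop the remainder from being controlled uniformly, so I would make the choice $M_D|h|^\delta\le\varepsilon/2$ explicit.
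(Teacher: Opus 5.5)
Your proof is correct and follows the paper's route: divide out $|h|^{\beta_S}$, use $d_S+\varepsilon\ge\varepsilon$ together with $M_D|h|^\delta\le\varepsilon/2$ to keep the perturbed denominator bounded below, combine with the numerator expansion, average under $\mu$, and invert the leading term. Your explicit bound is slightly more careful than the paper: the cross term carries a factor $r_\theta^S$, so the $O(|h|^{\gamma_S+\delta})$ remainder holds after integration (via $\mathfrak C_S<\infty$). It is not pointwise uniform, as the paper's ``uniformly'' suggests, unless $q_S$ is bounded.
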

\emph{Proof.}
Factor $|h|^{\beta_S}$ from the denominator in \eqref{eq:relative}. Since $d_S+\varepsilon\ge\varepsilon$ and $|R_{D,S}|/|h|^{\beta_S}\le M_D|h|^\delta$, for sufficiently small $|h|$ the latter term is at most $\varepsilon/2$. Hence the reciprocal denominator admits a uniform expansion around $d_S+\varepsilon$. Multiplication by the numerator expansion gives
$\mathcal E_S(x_t,t;h)=\kappa_S|h|^{\gamma_S}r_\theta^S(x_t,t)+O(|h|^{\gamma_S+\delta})$ uniformly. Averaging under $\mu$ and setting $|h|=K^{-1}$ yields \eqref{eq:scaling}; inversion of leading term 
and replacing ${\mathfrak C}_S$ by empirical mean gives \eqref{eq:budget}. \hfill$\square$

For $N$ independent probes, $\widehat{\mathfrak C}_S$ is a Monte Carlo estimate of $\mathfrak C_S$. If $0\le r_\theta^S(x_t,t)\le R_S$ $\mu$-a.s., Hoeffding's inequality gives, with probability at least $1-\eta$,
\begin{equation}
|\widehat{\mathfrak C}_S-\mathfrak C_S|\le R_S\sqrt{{\log(2/\eta)}/{2N}},
\label{eq:concentration}
\end{equation}
which induces a confidence interval for the unrounded leading-order budget through the monotonicity of~\eqref{eq:budget}.
\subsection{Euler and Heun}
Along an ODE trajectory, with all derivatives evaluated at $(x_t,t)$, write
$v_\theta=v_\theta(x_t,t)$ and define
\begin{align}
a_\theta=&\partial_t v_\theta+J_xv_\theta v_\theta,\quad
g_\theta=J_xv_\theta a_\theta,\\
c_\theta=&D^2_{(x_t,t)}v_\theta[(v_\theta,1),(v_\theta,1)].
\end{align}
For a solver $S$, initialized from the exact state $x_{t_n}=x_t$, define the one-step error as
$e_S(x_t,t;h)
= x_{t+h}-x_{t_{n+1}}^{S}$.
Taylor expansion along the exact trajectory gives
\begin{equation}
x_{t+h}=x_t+hv_\theta+(h^2/2)a_\theta+
(h^3/6)(c_\theta+g_\theta)+O(h^4),
\end{equation}
where $\frac{d}{dt}a_\theta=c_\theta+g_\theta$ and
$ v_\theta(x_t+h v_\theta,t+h)=v_\theta+h a_\theta+({h^2}/{2})c_\theta+O(h^3)$.
Hence Euler and Heun have respectively
\begin{align}
e_E=({h^2}/{2})a_\theta+O(h^3),~\hspace{0.1cm}
e_H =({h^3}/{12})(2g_\theta-c_\theta)+O(h^4),
\end{align}
which yields
\begin{align}
{\mathfrak C}_E&=\E_\mu\frac{\norm{a_\theta}}{\norm{v_\theta}+\varepsilon},
&
\widehat K_E&=\left\lceil\widehat{\mathfrak C}_E/(2\tau)\right\rceil,
\label{eq:euler}\\
{\mathfrak C}_H&=\E_\mu\frac{\norm{2g_\theta-c_\theta}}
{\norm{v_\theta}+\varepsilon},
&
\widehat K_H&=\left\lceil
\sqrt{\widehat{\mathfrak C}_H/(12\tau)}\right\rceil .
\label{eq:heun}
\end{align}
Euler uses one NFE per step and Heun two, so $\widehat{\mathrm{NFE}}_E=\widehat K_E$ and $\widehat{\mathrm{NFE}}_H=2\widehat K_H$. Here the generic coefficients in \eqref{eq:local} specialize to $q_E=\norm{a_\theta}$ and $q_H=\norm{2g_\theta-c_\theta}$. The Euler coefficient is trajectory acceleration, also used by recent fixed-budget scheduling methods such as SharpEuler~\cite{gupta2026sharpen}; in contrast, \FB{} uses it to estimate the total budget. Heun depends on a solver-specific third-order combination rather than acceleration alone.
The same numerator coefficients also control terminal ODE discretization error. Under standard Lipschitz stability and uniform local expansions, with backward step magnitude $|h|=1/K$, error propagation over a unit interval gives
\begin{align}
\norm{x_0-x_E^{(K)}}&\le \frac{e^L}{2K^2}\sum_{n=0}^{K-1}\norm{a_\theta(x_{t_n},t_n)}+O(K^{-2}),\\
\norm{x_0-x_H^{(K)}}&\le \frac{e^L}{12K^3}\sum_{n=0}^{K-1}\norm{2g_\theta-c_\theta}(x_{t_n},t_n)+O(K^{-3}).
\label{eq:globalcompact}
\end{align}
If additionally $\norm{v_\theta(x_t,t)}+\varepsilon\le V$, Riemann-sum convergence yields
$\norm{x_0-x_E^{(K)}}\le e^L V\mathfrak C_{E,\mathrm{traj}}/(2K)+o(K^{-1})$ and
$\norm{x_0-x_H^{(K)}}\le e^L V\mathfrak C_{H,\mathrm{traj}}/(12K^2)+o(K^{-2})$, where
$\mathfrak C_{S,\mathrm{traj}}=\int_0^1 r_\theta^S(x_t,t)dt$.
Thus the normalized coefficients control terminal error along exact sampler trajectories under this mild velocity bound. \FB{} replaces these trajectory-time averages by interpolation-probe averages, an approximation evaluated empirically below. No implication from numerical error to FID is assumed.

\subsection{Euler--Maruyama}
We consider the additive-noise SDE
$dX_t=f_\theta(X_t,t)dt+G(t)dW_t$.  The following construction is a
\emph{strong local-error} statement: the exact and numerical one-step
solutions are coupled with the same Brownian path.  For
$\Delta t>0$, define
$L_{\rm EM}(x_t,t;\Delta t)
=\left(\E[\norm{X_{t+\Delta t}-X^{\rm EM}_{t+\Delta t}}^2\mid X_t=x
]\right)^{1/2}.$
We formulate the local analysis in the sampler's increasing computational-time coordinate, avoiding sign ambiguities from the backward physical-time parameterization. Let
$A=J_xf_\theta(x_t,t)G(t)$ and $B=\partial_tG(t)$, and let
$\norm{\cdot}_F$ denote the Frobenius norm. The additive-noise It\^o--Taylor expansion coupled to the same Brownian path yields the leading defect
$A\int_0^{\Delta t}(W_s-W_0)ds+B\int_0^{\Delta t}s\,dW_s+O_{L^2}(\Delta t^2)$.
Writing $I_1=\int_0^{\Delta t}(W_s-W_0)ds$ and $I_2=\int_0^{\Delta t}s\,dW_s$, stochastic Fubini and It\^o isometry give
$\E I_1^2=\E I_2^2=\Delta t^3/3$ and $\E[I_1I_2]=\Delta t^3/6$. Therefore
\begin{equation}
L_{\rm EM}(x_t,t;\Delta t)=\frac{\Delta t^{3/2}}{\sqrt3}
\big(\norm A_F^2+\norm B_F^2+\langle A,B\rangle_F\big)^{1/2}+O(\Delta t^2).
\label{eq:em-local}
\end{equation}
The coefficient is nonnegative because
$\norm A_F^2+\norm B_F^2+\langle A,B\rangle_F
=\tfrac12(\norm A_F^2+\norm B_F^2+\norm{A+B}_F^2)$.
This $O(\Delta t^{3/2})$ one-step RMS defect is consistent with the
order-one strong convergence of Euler--Maruyama (EM) for sufficiently regular
additive-noise SDEs; it should not be confused with the generic
order-$1/2$ strong rate for multiplicative noise.
Define the EM displacement scale by
$D_{\rm EM}(x_t,t;\Delta t)=\big(\E[\norm{X^{\rm EM}_{t+\Delta t}-x}^2\mid X_t=x]\big)^{1/2}$.
On a probe-time domain $\mathcal T=[t_{\min},t_{\max}]\subset(0,1)$ where $G$ and $\partial_tG$ are bounded and $\inf_{t\in\mathcal T}\norm{G(t)}_F>0$, the diffusion term dominates and
$D_{\rm EM}=\Delta t^{1/2}\norm G_F+O(\Delta t)$.  Hence
$\alpha_{\rm EM}=3/2$, $\beta_{\rm EM}=1/2$, and
$\gamma_{\rm EM}=1$.  Proposition~\ref{prop:scaling} gives
\begin{equation}
{\mathfrak C}_{\rm EM}
=\E_\mu\frac{
(\norm A_F^2+\norm B_F^2+\langle A,B\rangle_F)^{1/2}}
{\norm G_F+\varepsilon},
~
\widehat K_{\rm EM}
=\left\lceil\frac{\widehat{\mathfrak C}_{\rm EM}}
{\sqrt3\,\tau}\right\rceil .
\label{eq:em}
\end{equation}
Accordingly, SDE probes are restricted to $t\in\mathcal T$; this excludes endpoints where $G$ vanishes or $\partial_tG$ becomes singular and the above normalization/expansion may fail.  Euler--Maruyama uses one drift/network
evaluation per step, so
$\widehat{\mathrm{NFE}}_{\rm EM}=\widehat K_{\rm EM}$.
Importantly, this estimator controls a strong numerical-resolution proxy;
we make no theoretical claim that its tolerance is calibrated to FID. In practice, \FB{} draws $N$ interpolation probes $x_t=(1-t)x_0+t\epsilon$, evaluates the solver coefficient in \eqref{eq:euler}, \eqref{eq:heun}, or \eqref{eq:em}, averages it to obtain $\widehat{\mathfrak C}_S$, and applies \eqref{eq:budget}; the NFE multiplier is one for Euler/EM and two for Heun.

\section{Experiments}
\label{sec:experiments}

\noindent \textbf{Experimental Setup.}
We evaluate \FB{} across six pretrained models: SiT-XL/2~\cite{ma2024sit} and FlowDCN-XL/2~\cite{wang2024exploring} on ImageNet~\cite{deng2009imagenet}, and RF-UNet, 1-RF, 2-RF, and 3-RF~\cite{liu2023flow} on CIFAR-10~\cite{krizhevsky2009learning}, using the solvers from their original works and additionally with Heun for FlowDCN. For all model--solver pairs, \FB{} estimates the step budget from $2$K interpolation probes using numerical tolerance $\tau=0.010$ for the ODE solvers and $\tau=0.015$ for the SDE solver, with $\varepsilon=10^{-8}$, and converts the step budget to the $\mathrm{NFE}$ budget. We use one tolerance within each dynamics class rather than tuning it per model. The experiment tests whether a fixed numerical tolerance yields useful budgets across architectures, not whether the tolerance is universally calibrated to perceptual quality. The predicted $\mathrm{NFE}$ budgets are essentially unchanged when setting $\varepsilon=0$. For each seed, FID is computed from $10$K samples at its predicted budget. Each seed uses an independent probe set, so the reported variation reflects Monte Carlo variability in the budget estimate. Each run used an NVIDIA A100 40GB GPU. 

\begin{table}[t]
\centering
\caption{\FB{}-predicted budgets and corresponding FID, reported as mean $\pm$ std over three seeds. Reference FID uses  $\mathrm{NFE}=250$ for SiT-XL/2, FlowDCN-XL/2, and RF-UNet, and $\mathrm{NFE}=1$ for 1-RF, 2-RF, and 3-RF.}
\label{tab:main_results}
\scriptsize
\setlength{\tabcolsep}{3pt}
\renewcommand{\arraystretch}{1.0}
\resizebox{\columnwidth}{!}{%
\begin{tabular}{c|c|c|c|c}
\toprule

Model
& Solver
& $\widehat{\mathrm{NFE}}$
& FID
& Reference FID \\

\midrule

SiT-XL/2 & ODE Euler
& $146\,{\scriptstyle \pm 3}$ & $16.60\,{\scriptstyle \pm 0.01}$
& $16.41$ \\

SiT-XL/2 & ODE Heun
& $58\,{\scriptstyle \pm 2}$ & $16.81\,{\scriptstyle \pm 0.21}$
& $16.56$ \\

SiT-XL/2 & SDE EM
& $210\,{\scriptstyle \pm 2}$ & $16.59\,{\scriptstyle \pm 0.16}$
& $16.06$ \\

\midrule

FlowDCN-XL/2 & ODE Euler
& $150\,{\scriptstyle \pm 2}$ & $15.53\,{\scriptstyle \pm 0.01}$ & $15.36$  \\

FlowDCN-XL/2 & ODE Heun
& $42\,{\scriptstyle \pm 0}$ & $15.94\,{\scriptstyle \pm 0}$ & $15.16$  \\

FlowDCN-XL/2 & SDE EM
& $178\,{\scriptstyle \pm 1}$ & $15.78\,{\scriptstyle \pm 0.22}$ & $15.42$ \\

\midrule

RF-UNet & ODE Euler
& $140\,{\scriptstyle \pm 9}$ & $6.79\,{\scriptstyle \pm 0.03}$
& $6.59$ \\

\midrule

1-RF & ODE Euler
& $148\,{\scriptstyle \pm 11}$ & $7.40\,{\scriptstyle \pm 0.05}$
& $372.51$ \\

2-RF & ODE Euler
& $19\,{\scriptstyle \pm 2}$ & $7.80\,{\scriptstyle \pm 0.02}$
& $16.41$ \\

3-RF & ODE Euler
& $15\,{\scriptstyle \pm 1}$ & $8.24\,{\scriptstyle \pm 0.01}$
& $11.63$\\

\bottomrule
\end{tabular}%
}
\end{table}

\noindent \textbf{Main Results.}
Fig.~\ref{fig:fid_vs_nfe} and Table~\ref{tab:main_results} show that \FB{} selects budgets near the low-FID plateau of the empirical curves. For SiT, FlowDCN, and RF-UNet, using a reference NFE of $250$, it reduces the sampling budget by $16.0$--$76.8\%$ for SiT-XL/2, $28.8$--$83.2\%$ for FlowDCN-XL/2.  For the same model, the predicted budget also depends strongly on the solver: Heun requires only $58$ versus $146$ $\mathrm{NFE}$s for SiT-XL/2 and $42$ versus $150$ for FlowDCN-XL/2 under the same ODE tolerance, consistent with its higher-order accuracy. The FID--NFE curves are used only retrospectively and they are not used for budget estimation. Overall, \FB{} identifies a solver-dependent quality--cost operating point rather than directly predicting FID saturation.

\begin{figure}[h]
    \centering

    \begin{subfigure}{\linewidth}
        \centering
        \includegraphics[width=\linewidth]{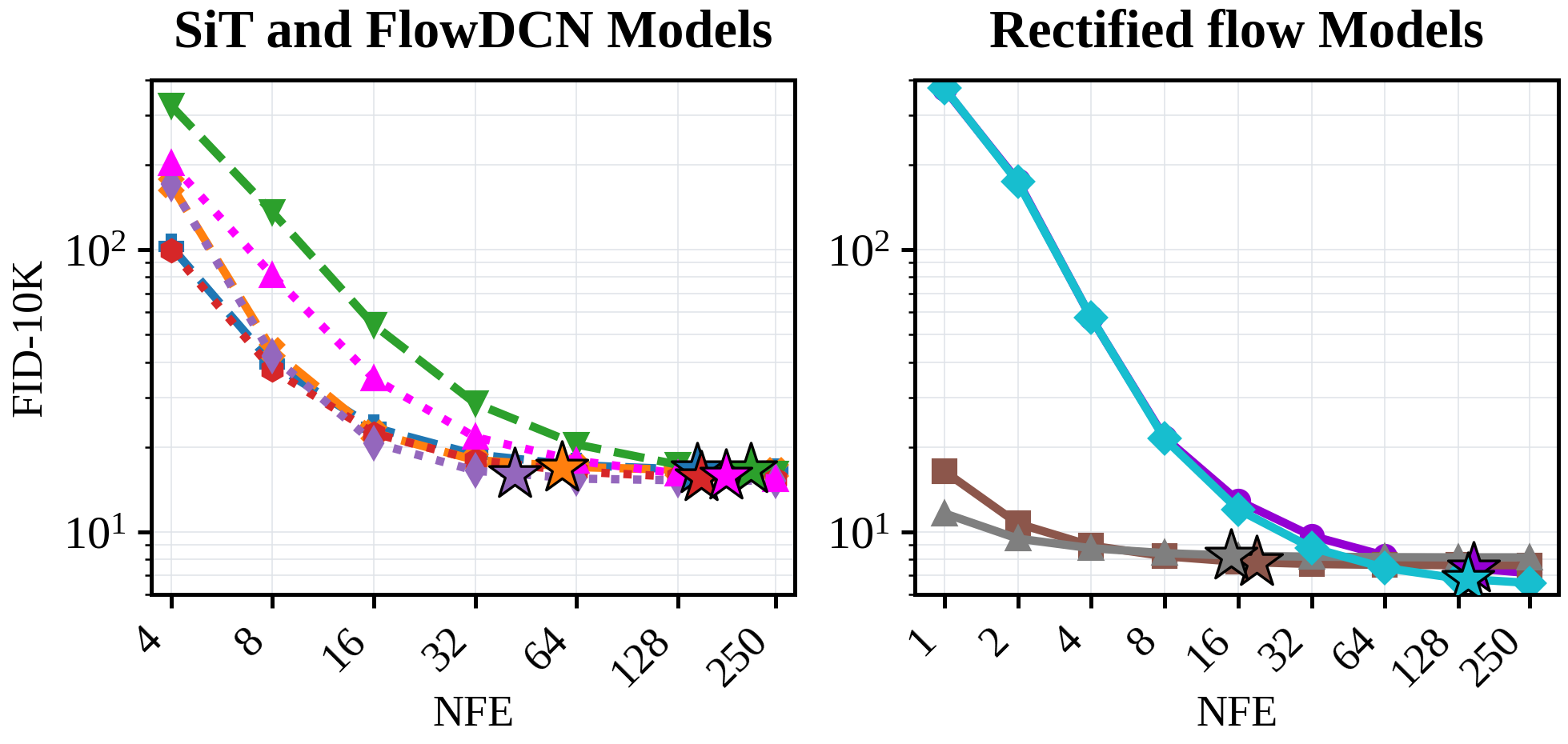}
        \vspace{-0.6cm}
        \caption{FID versus number of function evaluations (NFE), with stars
        indicating the \FB{}-predicted NFE budgets.}
        \label{fig:fid_vs_nfe}
    \end{subfigure}

    \begin{subfigure}{\linewidth}
        \centering
        \includegraphics[width=\linewidth]{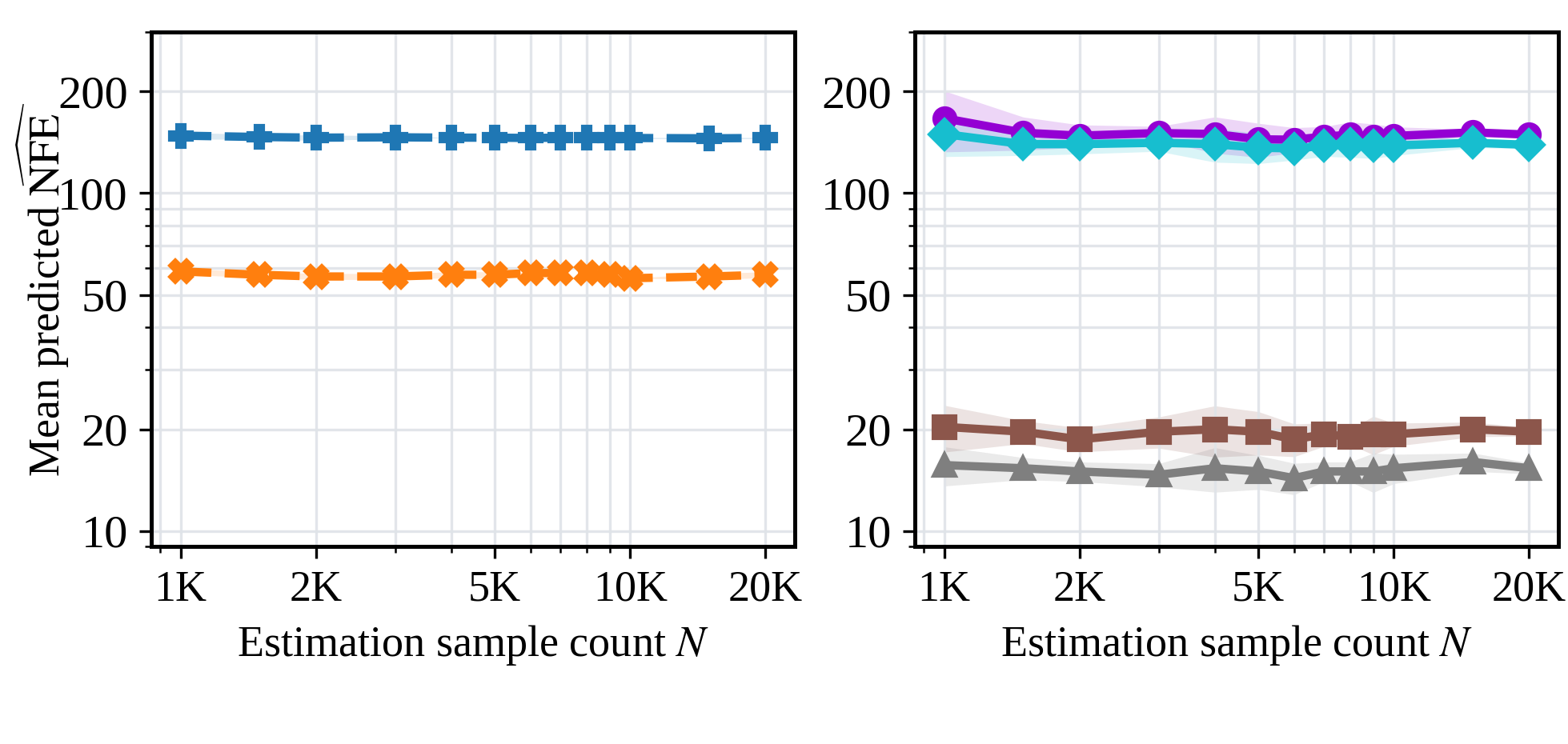}
        \vspace{-0.9cm}
        \caption{Stability of $\widehat{\mathrm{NFE}}$ versus probe count $N$
        at $\tau=0.010$, showing means over three seeds with shaded 95\%
        confidence intervals.}
        \label{fig:budget-stability}
    \end{subfigure}

    \begin{subfigure}{\linewidth}
        \centering
        \includegraphics[width=\linewidth]{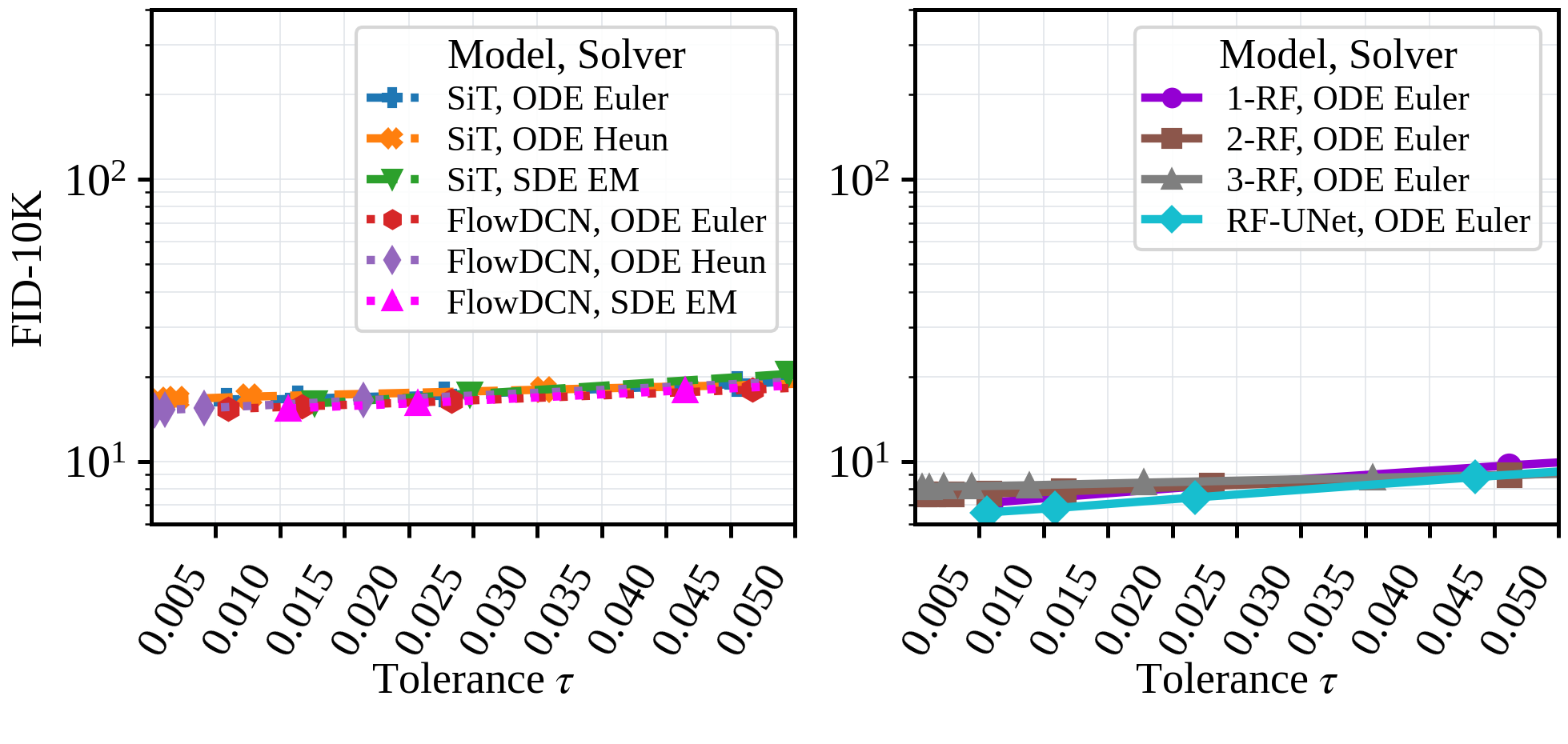}
        \vspace{-0.7cm}
        \caption{FID-10K as a function of the tolerance $\tau$.}
        \label{fig:fid-vs-tau}
    \end{subfigure}

    \caption{
    Analysis of \FB{} budget estimation.
    }
    \label{fig:budget-analysis}
\end{figure}

\noindent \textbf{Stability Analysis.}
Fig.~\ref{fig:budget-stability} shows stable $\widehat{\mathrm{NFE}}$ estimates from $1$K to $20$K probes with narrow $95\%$ confidence bands, supporting use of $2$K probes. As shown in Fig.~\ref{fig:fid-vs-tau}, varying $\tau$ changes the estimated $\widehat{\mathrm{NFE}}$ as expected, while FID varies only modestly around the selected tolerances, indicating that the resulting operating point is robust to small changes in $\tau$.

\noindent \textbf{Computational Cost Analysis.}
For 3-RF, constructing the FID--NFE curve requires generating $10$K samples at six budgets, for a total of $110.3$ minutes. In contrast, \FB{} estimates $\widehat{\mathrm{NFE}}=15$ from $2$K probes and evaluates this  point, requiring $24.3$ minutes. This yields a $4.5\times$ reduction in wall-clock time, saving $86.0$ minutes. Since 3-RF is evaluated only up to $\mathrm{NFE}=32$, the advantage should grow for models requiring broader $\mathrm{NFE}$ sweeps, where generation and FID evaluation must be repeated at additional budgets. 

\section{Conclusion}
\FB{} estimates a fixed solver's numerical resolution from scale-normalized local discretization error, without an NFE--quality sweep. The Euler, Heun, and additive-noise Euler--Maruyama laws are numerical statements; for ODE solvers, the normalized coefficients also control terminal integration error under a bounded-velocity assumption. 

\noindent \textbf{Limitations and Future Work.}
A limitation is that interpolation probes use training-distribution samples and only approximate the sampler trajectory law. Future work will study calibration between $\tau$ and perceptual metrics such as FID.

\bibliographystyle{IEEEbib}
\bibliography{refs}
\end{document}